\newtheorem{lemma}{Lemma}
\newtheorem{definition}{Definition}[section]
\newcommand{\padre}{{PADRe}\xspace}
\newcommand{\ccb}{\cellcolor{blue!10}}
\newcommand{\ccg}{\cellcolor{gray!30}}
\title{\padre: A Unifying \underline{P}olynomial \underline{A}ttention \underline{D}rop-in \underline{Re}placement for Efficient Vision Transformer}
\author{%
  % David S.~Hippocampus\thanks{Use footnote for providing further information
  %   about author (webpage, alternative address)---\emph{not} for acknowledging
  %   funding agencies.} \\
  % Department of Computer Science\\
  % Cranberry-Lemon University\\
  % Pittsburgh, PA 15213 \\
  % \texttt{hippo@cs.cranberry-lemon.edu} \\
  % 
  Pierre-David Letourneau$^*$ \And Manish Kumar Singh$^*$ \And Hsin-Pai Cheng \And Shizhong Han \And Yunxiao Shi \And Dalton Jones \And Matthew Harper Langston \And Hong Cai \And Fatih Porikli \\
  Qualcomm AI Research$^\dagger$  \\
  \texttt{\{ pletourn, masi, hsinpaic, shizhan, yunxshi,} \\
  \texttt{daltjone, hlangsto, hongcai, fporikli \} @ qti.qualcomm.com}
  % examples of more authors
  % \And
  % Coauthor \\
  % Affiliation \\
  % Address \\
  % \texttt{email} \\
  % \AND
  % Coauthor \\
  % Affiliation \\
  % Address \\
  % \texttt{email} \\
  % \And
  % Coauthor \\
  % Affiliation \\
  % Address \\
  % \texttt{email} \\
  % \And
  % Coauthor \\
  % Affiliation \\
  % Address \\
  % \texttt{email} \\
}
\begin{document}
\def\thefootnote{*}\footnotetext{Equal contribution}
\def\thefootnote{$\dagger$}\footnotetext{Qualcomm AI Research is an initiative of Qualcomm Technologies, Inc.}

\maketitle

\begin{abstract}
\vspace{-5pt}

% \dc{Dave's version: I try to better combine the hardware friendly benefit and simple yet efficient design}
We present Polynomial Attention Drop-in Replacement (\textbf{\padre}), a novel and unifying framework designed to replace the conventional self-attention mechanism in transformer models. Notably, several recent alternative attention mechanisms, including Hyena, Mamba, SimA, Conv2Former, and Castling-ViT, can be viewed as specific instances of our \padre framework.  \padre leverages polynomial functions and draws upon established results from approximation theory, enhancing computational efficiency without compromising accuracy.  \padre's key components include multiplicative nonlinearities, which we implement using straightforward, hardware-friendly operations such as Hadamard products, incurring only linear computational and memory costs. \padre further avoids the need for using complex functions such as Softmax, yet it maintains comparable or superior accuracy compared to traditional self-attention. 
We assess the effectiveness of \padre as a drop-in replacement for self-attention across diverse computer vision tasks. These tasks include image classification, image-based 2D object detection, and 3D point cloud object detection. Empirical results demonstrate that \padre runs significantly faster than the conventional self-attention (\textbf{11$\!\times$$\,\sim\,$43$\times$} faster on server GPU and mobile NPU) while maintaining similar accuracy when substituting self-attention in the transformer models.

\end{abstract}

\section{Introduction}
\vspace{-6pt}
% In areas such as natural language processing~\cite{}, computer vision~\cite{}, and speech and audio~\cite{}, transformers play a crucial role in enabling recent AI developments. Specifically, vision transformers (ViTs)~\cite{dosovitskiy2020image} are at the core of the latest advances in computer vision, including open-world recognition~\cite{kirillov2023segment}, image and video generation~\cite{rombach2022high, videoworldsimulators2024sora}, and large multi-modal models~\cite{liu2024visual, yang2023dawn}. 

Transformers have been pivotal in recent advancements across natural language processing, computer vision, and speech processing. In computer vision, Vision Transformers (ViTs)~\cite{dosovitskiy2020image} are particularly significant, driving innovations in areas such as open-world recognition~\cite{kirillov2023segment}, image and video generation~\cite{rombach2022high, videoworldsimulators2024sora}, and large multi-modal models~\cite{liu2024visual, yang2023dawn}. 

% However, due to the quadratically-increasing computational and memory costs associated with self-attention operations relative to input size, transformer execution can be computationally expensive. 
% % Due to the quadratically-increasing computational and memory costs of the self-attention operation w.r.t. the input size, however, transformers are computationally expensive to execute.  
% Given the inherently large input sizes for real-world use cases in computer vision~\footnote{e.g., high-resolution images and videos in virtual reality and camera applications, large 3D point clouds in autonomous driving}, this computational expense can be particularly challenging. The practical need to run transformers on compute- and memory-constrained hardware platforms further exacerbates these challenges. 

However, the computational and memory costs associated with self-attention operations in transformers can increase quadratically relative to input size.  This computationally-demanding challenge is particularly pronounced in real-world computer vision applications, which often involve inherently large input sizes, such as high-resolution images and videos in virtual reality and camera applications, or large 3D point clouds in autonomous driving. The need to run transformers on hardware platforms with limited computational and memory resources further intensifies these challenges.

To address these computational hurdles, we propose \padre, a unifying Polynomial Attention Drop-in Replacement scheme. \padre is a novel approach that can effectively substitute standard self-attention at low (linear computational and memory) costs without sacrificing accuracy. \padre's unifying framework provides a guide for designing alternative transformer architectures. In particular, we observe and demonstrate that many recently-proposed attention replacement mechanisms (e.g., Hyena~\cite{poli2023hyena}, Mamba~\cite{gu2023mamba}, Conv2Former~\cite{hou2022conv2former},  SimA~\cite{koohpayegani2024sima}, Castling-ViT \cite{you2023castling}), as well as the standard attention itself \cite{vaswani2017attention}, may in fact be interpreted as specific instances of the \padre framework (See Table \ref{tab:padre_equiv}).
% SwiftFormer~\cite{shaker2023swiftformer}, StarNet~\cite{ma2024rewrite} - leaving them out since we don't have the proofs

Our framework relies on polynomial approximants in the input to replace the standard self-attention mechanism. Polynomials are well-studied objects known to be powerful and efficient multi-dimensional function approximants over compact sets~\cite{trefethen2019approximation}. We use these existing results, together with the aforementioned observations, as guiding principles. We also leverage simple, mobile-friendly operations, such as Hadamard products, to induce nonlinearities, providing the necessary modeling capacity of the polynomial functions. Moreover, our design allows for easily and efficiently scaling up the modeling capacity of existing network architectures by increasing the degree, while maintaining linear computational complexity and memory cost. 

In addition to discussing the theoretical underpinnings and unifying properties of \padre, we further experimentally demonstrate the significant advantages of \padre over the standard self-attention through a variety of experiments involving computer vision tasks in practical contexts. More specifically, by replacing the self-attention operation with our design, we maintain the model performance while only incurring linear computational and memory costs. Further, by profiling on actual hardware platforms, we demonstrate the significantly better on-device efficiency of \padre as compared to the standard self-attention operation. 
% For instance, \hc{\textit{maybe we can quote some numbers here if available}}
% The Vision Transformer (ViT)~\cite{dosovitskiy2020image} has been the core of many of the latest advances in computer vision, in areas like image and video generation (Sora, SD3), large multi-modal models (Llava), representation learning (Dinov2), etc., thanks to the favorable properties of vision transformers, such as the representation power of self-attention and its ability to scale up with more parameters and data. However, vision transformers are computationally expensive, with the self-attention operation incurring quadratic computational complexity and memory cost w.r.t. the input size. This makes it expensive and challenging to run vision transformers, especially for high-resolution images and/or on resource-constrained devices.

% in the context of computer vision applications. Our proposed architecture introduces high-order polynomial terms and leverages 2D convolutions to capture relationships among every dimensions (i.e. token, features, embedded dimension). As we show through experiments, by using our proposed framework to replace standard self-attention, we can maintain, and even improve, accuracy at a cost significantly lower (linear) than that of the standard attention (quadratic).

We summarize the main contributions of the paper as follows:
\begin{itemize}
\vspace{-5pt}
    \item We propose \padre, a unifying framework for replacing the attention mechanism based on polynomial approximants, which provides an efficient, linear-complexity alternative to the standard self-attention without sacrificing accuracy.
    \item We show that \padre provides a unifying formulation for many of the recently-proposed attention replacement schemes, including Hyena~\cite{poli2023hyena}, Mamba~\cite{gu2023mamba}, Conv2Former~\cite{hou2022conv2former},  SimA~\cite{koohpayegani2024sima}, and Castling-ViT \cite{you2023castling}.
    \item We implement a specific instance of \padre following the proposed design principles. We demonstrate the performance and advantage of \padre through multiple experiments related to diverse, practical computer vision applications. Our \padre-based models achieve similar or better accuracy than the original ones, while incurring significantly lower costs. Our on-device profiling further validates the computational efficiency of \padre.
    
\end{itemize}

% =======================================================================
% =======================================================================
% =======================================================================
\section{Related Work}
\vspace{-6pt}

% \subsection{Primer on Transformer - \hc{Can remove this section}} 
% \vspace{-4pt}
% Consider an input $X \in \mathbb{R}^{N \times D}$, where $N$ is the number of tokens (e.g., feature vectors of image patches) and $D$ is the feature dimension; the batch dimension is omitted here for conciseness. In a standard vision transformer layer, $X$ first goes through linear layers to generate the query $Q^h$, key $K^h$, and value $V^h$, respectively, for each attention head, $h\in \{1,...,H\}$, as follows: \vspace{-3pt}
% \begin{equation}\label{QKV}
% \begin{split}
%     {Q^h} = {W_{Q}^h}{X}, {K^h} = {W_{K}^h}{X}, {V^h} = {W_{V}^h}{X}, \\[-3pt]
% \end{split}
% \end{equation}
% where $W_{Q}^h$, $W_{K}^h$, $W_{V}^h$ are linear transformation matrices.

% Then, an attention map is computed based on the query and key, and then multiplied with the value: \vspace{-3pt}
% \begin{equation}\label{SA}
% \begin{split}
%     A^h = \text{softmax}(\frac{Q^h(K^h)^T}{\sqrt{D}}),\quad
%     X_{a}^h = A^h \cdot V^h. \\[-6pt]
% \end{split}
% \end{equation}

% The outputs from all attention heads are concatenated and fed into a linear layer, to general the final output of this transformer layer.

\textbf{Efficient Vision Transformer Backbones:} 
% Most existing works propose holistically designed vision transformer backbones, which can be used to extract features for different vision tasks, such as image classification, object detection, and segmentation. 
% Most methods adopt a hierarchical architecture, where the spatial dimension of the image feature map is progressively downsampled~\cite{liu2021swin}. This allows the deeper transformer layers to operate on smaller inputs and thus reduces computational and memory costs. 
A majority of existing research proposes holistically designed vision transformer backbones. These backbones are capable of extracting features for various vision tasks, such as image classification, object detection, and segmentation. 
Most of these methods employ a hierarchical architecture, where the spatial dimension of the image feature map is progressively downsampled~\cite{liu2021swin}. This strategy allows the deeper transformer layers to operate on smaller inputs, thereby reducing computational and memory costs.
Specifically, many recent designs adopt convolution layers in the early stages of the network and apply transformers only to significantly downsampled feature maps later in the network~\cite{li2022efficientformer, hatamizadeh2023fastervit, vasu2023fastvit, yang2022moat}. While this approach considerably improves efficiency, it still suffers from the inherent quadratic complexity when input size becomes large. 
Building upon hierarchical and/or convolution-transformer hybrid architectures, some studies propose more efficient, alternative attention schemes, e.g., ReLU-based attention~\cite{cai2024efficientvit}, transposed attention~\cite{maaz2022edgenext}, convolutional modulation~\cite{hou2022conv2former}, additive attention~\cite{shaker2023swiftformer}, shift-add attention~\cite{you2024shiftaddvit}, linear-angular attention~\cite{you2023castling}. However, all these models are designed to process a single 2D image and cannot be directly applied to more complex visual inputs, such as 3D point clouds. Many of them also rely on jointly designing the entire backbone to achieve good performance on 2D tasks and do not work well as drop-in replacements. In fact, using some of these efficient attention methods as drop-ins can degrade the model’s performance, as reported by~\cite{han2023flatten}.
% \hc{\textit{Also cannot be used when self-attention is other places}}. \hc{Will also mention how they rely on other designs to get good accuracy and do not work when used as a drop-in, like Castling}

\textbf{Efficient Attention Drop-in Replacements:} To enhance the computational efficiency of vision transformers, various alternative attention mechanisms have been introduced. For instance, Swin~\cite{liu2021swin} proposes windowed attention, where self-attention computation is limited to local neighborhoods of an image. Although this theoretically reduces computation, the intricate reshaping and indexing operations pose practical challenges on resource-constrained hardware platforms. Several works propose linear-cost attentions~\cite{koohpayegani2024sima, you2023castling, han2023flatten, shaker2023swiftformer}. Some of them (e.g.,~\cite{koohpayegani2024sima, han2023flatten}) need to compute normalization factors at inference, which requires summing over all the tokens. This is inefficient on memory-constrained mobile platforms, especially when the number of tokens is large. Others (e.g.,~\cite{you2023castling, shaker2023swiftformer}) are special cases of our proposed \padre framework (see Section \ref{sec:unification}). Furthermore, it is worth noting that many of these alternative attentions still require multiple heads to maintain accuracy, a requirement that \padre eliminates.

% \hc{Also discuss ShiftAddViT}

% \hc{\textit{One common drawback of existing alternative attentions is the need to still use multiple heads, like SimA, CastlingViT}}

% will complete later
% ViTs
% Swin-like architectures
% Conv-transformer architectures
% Alternative ones like EdgeNeXT, EfficientViT, SwiftFormer - convoluted design, falls within our framework (?)

% =======================================================================
% =======================================================================
% =======================================================================
\section{\padre Framework Approach}
\label{sec:framework}
\vspace{-6pt}
We describe our framework's technical details; at a high level, \padre may be described as follows:
 \begin{center}
 
     \emph{Each element of the output tensor is a polynomial function of the elements of the input tensor, the coefficients of which are polynomial functions of the parameters (weights). Further, this dependency is such that the coefficients and the output itself may be computed efficiently (e.g., in linear time).}
 \end{center}

For $N\in\mathbb{N}$ tokens and embedding / feature channel dimension $D\in\mathbb{N}$, consider an input $X \in \mathbb{R}^{N\times D}$.  We denote $d$ as the {\em degree} of our framework; this parameter determines the strengths of the nonlinearities as discussed below.  \padre operations are performed on each batch element identically. 

We omit batch dimension for brevity and consider a single head; if more than one head is used, each is treated independently using copies of the framework. \padre consists of three main components: {\bf(1) Linear transformations; (2) Nonlinearities; (3) Optional operations (e.g., output resizing, normalization)}. Further details regarding each step are discussed below in the following sections.

\subsection{Linear Transformations} 
\vspace{-4pt}
Linear transformations consist of the following operations:  for $i=1,..., d$, compute
 \begin{equation}\label{eq:linear transformation}
     Y_i = A_i \, X \, B_i,
 \end{equation}
 where $A_i$ and $B_i$ are $N\!\times\!N$ and $D\!\times\!D$ weight matrices, and $Y_i \in \mathbb{R}^{N \times D}$. We impose \emph{structure} on the matrices $A_i$ and $B_i$ for efficient multiplication; common structures include convolutions, sparse patterns, low-rank and hierarchical matrices, which can be optimized and parallelized on existing  platforms. Note that left and right multiplications create a form of \emph{mixing} among tokens (in the spatial dimension in the case of visual input) and in the embedding/channel dimension, respectively. 
 Coupled with nonlinearities (Section~\ref{subsec:nonlinearities}) this leads to the presence of \emph{cross-terms} (i.e., multivariate monomials) in the expansion that are key to the expressivity of the polynomial functions.

\subsection{Nonlinearities}
\label{subsec:nonlinearities}
\vspace{-4pt}
We construct nonlinear, polynomial functions of the input by first defining the matrices:
\begin{align} \label{eq: hadamard}
    &Z_1 = Y_1,\\
    &Z_{i+1} =  ( C_i\,Z_i\,D_i ) \odot  Y_{i+1},\quad \text{for } i\in \{1, ..., d-1\}, \label{eq: hadamardi}
\end{align}
where $\odot$ indicates the Hadamard (element-wise) product, which is an efficient, parallelizable, and hardware-friendly operation. $C_i$ and $D_i$ are $N\!\times\!N$ and $D\!\times\!D$ weight matrices that can perform additional token and channel mixing on the $Z_i$ tensor as needed, and $Z_i \in \mathbb{R}^{N \times D}$. Similarly, we require $C_i$ and $D_i$ to have structures that allow for efficient multiplications. 
% $Z_i$ effectively contains degree-$i$ homogeneous polynomials in the input $X$. 
In particular, we have the following lemma which is important for the analysis of the scheme,
\begin{lemma}
    \label{lem:padrehomogeneous}
    The elements of $Z_i$ are homogeneous polynomials of degree $i$ in the input $X$.
\end{lemma}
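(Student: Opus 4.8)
The plan is to prove the statement by induction on $i$, using the characterization that a function $p(X)$ is a homogeneous polynomial of degree $k$ precisely when $p(\lambda X) = \lambda^k p(X)$ for every scalar $\lambda$, together with the two elementary closure properties this entails: (i) a linear combination of homogeneous polynomials of degree $k$ is again homogeneous of degree $k$, and (ii) the product of homogeneous polynomials of degrees $p$ and $q$ is homogeneous of degree $p+q$. Throughout, I treat the weight matrices $A_i, B_i, C_i, D_i$ as constants independent of $X$, since they are parameters of the framework.

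For the base case $i=1$, we have $Z_1 = Y_1 = A_1 X B_1$ from \eqref{eq:linear transformation}. Each entry $(Y_1)_{mn} = \sum_{k,l} (A_1)_{mk}\, X_{kl}\, (B_1)_{ln}$ is a linear combination of the entries of $X$, hence homogeneous of degree $1$. For the inductive step, suppose every entry of $Z_i$ is homogeneous of degree $i$. Since $C_i$ and $D_i$ are constant, each entry of $C_i Z_i D_i$ is a linear combination of the entries of $Z_i$, so by property (i) it remains homogeneous of degree $i$. Independently, each entry of $Y_{i+1} = A_{i+1} X B_{i+1}$ is homogeneous of degree $1$, exactly as in the base case. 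The recursion \eqref{eq: hadamardi} defines $Z_{i+1}$ as the entrywise product of these two matrices, so $(Z_{i+1})_{mn} = (C_i Z_i D_i)_{mn}\cdot (Y_{i+1})_{mn}$ is a product of a degree-$i$ and a degree-$1$ homogeneous polynomial; by property (ii) it is homogeneous of degree $i+1$. This closes the induction.

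The argument is essentially bookkeeping, and the only point requiring genuine care — which I would flag as the main (if modest) obstacle — is confirming that passing through the left/right multiplications by $C_i$ and $D_i$ does not change the degree: because these matrices act as constants, they only form linear combinations of the entries of $Z_i$ and therefore \emph{preserve} degree-$i$ homogeneity rather than mixing degrees or introducing an inhomogeneous (e.g.\ constant or lower-order) component. Once this is granted, it is precisely the Hadamard product that cleanly increments the degree by exactly one at each step, yielding the stated degree-$i$ homogeneity of $Z_i$.
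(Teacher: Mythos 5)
Your proof is correct and follows essentially the same route as the paper's: induction on $i$, with the base case $Z_1 = A_1 X B_1$ being linear in $X$, the constant matrices $C_i, D_i$ preserving degree-$i$ homogeneity, and the Hadamard product with the degree-$1$ term $Y_{i+1}$ incrementing the degree to $i+1$. The only addition is your explicit scaling characterization of homogeneity, which the paper leaves implicit but which does not change the argument.
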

\begin{proof}
    We proceed by induction. In the case $i=1$, 
    \begin{equation}
        Y_1 = A_1 \, X \, B_1,
    \end{equation}
    which shows that each element of $Y_1$ is a linear function of the input. Now assume the elements of $Z_i$ are homogeneous polynomials of degree $i$ in the input and consider
    \begin{equation}
        Z_{i+1}  =  (C_i\,Z_i\,D_i)  \odot   (A_{i+1} \, X \, B_{i+1}).
    \end{equation}    
    Observe that the elements of $  A_{i+1} \, X \, B_{i+1} $ are homogeneous polynomials of degree $1$ by construction and the elements of $  C_i\,Z_i\,D_i$ are homogeneous polynomials of degree $i$ by assumption and construction (linear transformations do not affect this conclusion). The expression is therefore a homogeneous polynomial of degree $i+1$ as claimed.
\end{proof}
Next, we compute dense (rather than homogeneous) polynomials of the input to increase diversity, expressivity, and number of degrees of freedom, through the computation of the output tensor $P$,
\begin{equation} \label{eq: P}
    [P]_{m,n} = \sum_{i=1}^d [W]_{m,n,i} \,  [Z_i]_{m,n} + [L]_{m,n},
\end{equation}
where $W\in \mathbb{R}^{N\times D \times d}$ is a weight tensor, $1\leq m \leq N$ and $1 \leq n \leq D$ are token and embedding dimensions respectively, and $L \in \mathbb{R}^{N\times D}$ is the bias (i.e., $0^\text{th}$-order term).

% Following these steps, we proceed to construct nonlinear (polynomial) functions of the input. To do so, we define the matrices,
% \begin{align} \label{eq: hadamard}
%     Z_j = \odot_{i=1}^j Y_i,
% \end{align}
% for $j=1, ..., d$ where $\odot$ indicates Hadamard (element-wise) product, which is an efficient, hardware-friendly operation. $Z_j$ are effectively degree-$j$ homogeneous polynomials in the input $X$. This introduces nonlinearities through powers.

% Then, we compute dense polynomials (rather than homogeneous polynomials) of the input to increase diversity, expressivity and the number of degrees of freedom. This is done through the following computation,
% \begin{equation} \label{eq: P}
%     P_{m,n} = \sum_{j=1}^d W_{m,n,j} \,  [Z_j]_{m,n} + V_{m,n}
% \end{equation}
% where $W \in \mathbb{R}^{F \times G \times d}$ and $V$ is a $F \times G$ matrix (bias, $0^{th}$-order term). \\
% \hc{Could you explain the m,n,F, and G?}

% \hc{@Pierre: Should we add linear matrix multiplications inside Eq. 4, since this is what we do in the actual implementation and is a bit more general? So after Hadamard product, we again apply token mixing and channel mixing, which effectively are left and right matmuls, before the output participates in the next Hadamard product (see implementation figure).\\
% So I think we can define it recursively like $Z_{j+1} = Y_{j+1} \odot (C_j Z_j W_j)$, where $C_j$ is the additional token mixing and $W_j$ is the same coefficient as in Eq.6. -herbert}

% \subsubsection{\bf Output resizing (optional)}
\subsection{Optional Operations}
\vspace{-4pt}
When an output of size different from to the input is required, we perform a computation of the form,
\begin{equation}
    O = U \, P \, V,
\end{equation}
where $U \in \mathbb{R}^{F \times N}$, $V \in \mathbb{R}^{D \times G}$, and $O \in \mathbb{R}^{F \times G}$ (required output shape).  As discussed, it is necessary that $U$ and $V$ possess structures (e.g., low-rank) to allow efficient application to $P$.

In addition, the $Y_i$'s tensors can optionally be normalized. While this is not necessary for the \padre instances in our experiments, it can be a potentially useful regularization technique.

% \hc{@Pierre: How about we group output resizing and normalization together, as ``optional operations''? We are not using either in our implementation, but one can potentially use them. -herbert}

\subsection{Overall Framework}
\vspace{-4pt}
Combining all operations together and leveraging Lemma~\ref{lem:padrehomogeneous}, the output takes the following form:\footnote{We ignore the optional resizing here, which is just a linear transformation.}
\begin{align}
    \label{eq:overallpoly}
    [P]_{m,n} &= \sum_{i=1}^d [W]_{m,n,i} \,   [Z_i]_{m,n} + [L]_{m,n} \sim \sum_{k \in \mathbb{N}^{N \times D} :  |k|\leq d} \pi_k \, x^k,
\end{align}
where $k$ is a multi-index, $|k| = \sum_{m=1}^{N} \sum_{n=1}^{D} k_{m,n} $ is the absolute degree, $\{\pi_k\}$ are coefficients (which have a complex polynomial relationship with the parameters/weights), and $x^k = \prod_{m=1}^{N} \prod_{n=1}^{D} x^{k_{m,n}}_{m,n} $ are monomials. Eq.~\eqref{eq:overallpoly} is a degree-$d$ multi-dimensional polynomial in the entries of the input $X$; that is, \padre uses hardware-friendly and efficient operations to produce an output tensor whose entries are (potentially normalized) polynomials of the entries of the input tensor. In particular, any proposed scheme that can be written in this form falls within the \padre framework.% \footnote{For \padre implementations, activation functions such as ReLU or GeLU can be approximated with polynomials (e.g.,~\cite{feng2023polylu}); however, these activation functions do not cause computational bottlenecks on most hardware platforms. We leverage standard combinations of neural network layers and their activation functions.}

% If we combine all operations together, we find that the output prior to resizing (which is just a linear transformation) takes the form,
% \begin{align}
%     \label{eq:overallpoly}
%     [P]_{m,n} &= \sum_{j=1}^d W_{m,n,j} \,  \left ( \prod_{i=1}^j [Y_i]_{m,n} \right ) + V_{m,n} \sim \sum_{|k|\leq d} \pi_k \, x^k
% \end{align}
% where $k \in \mathbb{N}^{N\times D}$ is a multi-index, $|k| = \sum_{i=1}^{ND} k_i $ , $\{\pi_k\}$ are coefficients (which have a complex polynomial relationship with the parameters/weights) and $x^k = \prod_{i=1}^{N} \prod_{j=1}^{D} x_{i,j} $ are monomials. This should be recognized as a degree-$d$ multi-dimensional polynomial in the entries of the input $X$. In other words, \padre uses hardware-friendly and efficient operations to produce an output tensor which entries are (potentially normalized) polynomials of the entries of the input tensor. In particular, any proposed scheme that can be written in this form falls within the \padre framework.

% \hc{@Pierre: Should we add a remark to discuss the nonlinear activation functions in these networks, like GeLU, ReLU, etc.? Although they are not polynomials, but usually they are not the computation bottlenecks either (so it's ok to just use them as they are).}

% This last point is discussed in Appendix \ref{appendix:equivalence} where we show that many commonly encountered attention replacement schemes (e.g., Hyena, SimA, Conv2Former, Mamba, etc.) are in fact specific instances of the \padre framework.

\subsection{Unifying Framework}
\label{sec:unification}
\vspace{-4pt}
Eq.~\eqref{eq:overallpoly} provides a general, quantitative representation of \padre's output. This formula in fact encompasses many recently-proposed schemes. 
Such observation may not be obvious at first sight but becomes clear following basic algebraic manipulations. For several recently proposed methods, we summarize their PADRe equivalence in Table~\ref{tab:padre_equiv} and provide detailed proofs to show that they are within our proposed \padre framework in Appendix~\ref{app: special cases}. Further, it can be easily verified that the ``attention" part of existing models based on the MetaFormer architecture~\cite{yu2022metaformer} and linear token mixing (e.g., convolution~\cite{liu2022convnet}, MLP~\cite{touvron2022resmlp, tolstikhin2021mlp}) are degree-$1$ \padre equivalences.  

While we focus on leveraging polynomial functions, \padre can be generalized to rational functions (i.e., ratios of polynomials) as discussed in Appendix~\ref{appendix:rational}. Although the standard self-attention cannot be represented by polynomials, it can be approximated by high-degree rational functions, as we show in Appendix~\ref{appendix:equivalence_attn}.

% Eq.\eqref{eq:overallpoly} is a quantitative representation of a general output of the \padre framework. As shown in Appendix \ref{appendix:equivalence}, this formula encompasses many of the recently-proposed schemes. Such observation may not be obvious at first sight but becomes clear following basic algebraic manipulations. Table \ref{tab:padre_equiv} summarizes these observations.

\begin{table}[!htbp]
    \small
    \centering
    \begin{tabular}{c|l|c} \hline
        \textbf{Attention Schemes}  & \textbf{\padre Equivalence} & \textbf{Proofs} \\ \hline
        SimA \cite{koohpayegani2024sima} & Normalized, homogeneous degree-3 polynomial & \ref{appendix:equivalence_sima}\\ 
        Conv2Former  \cite{hou2022conv2former} & Homogeneous degree-2 polynomial & \ref{appendix:equivalence_conv2former}\\  
        Hyena  \cite{poli2023hyena} & Homogeneous degree-$N$ polynomial (order-$N$ operator, commonly $3$)& \ref{appendix:equivalence_hyena}\\  
        Mamba  \cite{gu2023mamba} & Homogeneous degree-3 polynomial (approx.) & \ref{appendix:equivalence_mamba}\\  
        Castling-ViT  \cite{you2023castling} & Degree-3 polynomial &\ref{appendix:equivalence_castling}   \\
        Self-attention \cite{vaswani2017attention} & High-degree rational function/ratio of polynomials (approx.) & \ref{appendix:equivalence_attn} \\\hline
    \end{tabular}
    \vspace{3pt}
    \caption{\small These existing attention schemes can be shown to be \padre equivalences or can be approximated by \padre. Detailed proofs can be found in Appendix~\ref{app: special cases}.}
    \label{tab:padre_equiv}
\vspace{-8pt}
\end{table}

% \begin{table}[h]
% \centering
% \small
% \begin{tabular}{c|l}
% \hline
% \textbf{Polynomial Degree} & \textbf{Example Attention Alternatives in Existing Works}  \\ \hline
% \multirow{1}{*}{Degree-1} & PoolFormer~\cite{yu2022metaformer}, All-MLP Networks~\cite{touvron2022resmlp, tolstikhin2021mlp}, Modern ConvNets~\cite{liu2022convnet, yu2023inceptionnext}   \\
% \multirow{1}{*}{Degree-2} & Conv2Former~\cite{liu2022convnet}, 
% RepViT~\cite{wang2023repvit},
% StarNet~\cite{ma2024rewrite} \\ 
% \multirow{1}{*}{Degree-3} & Castling-ViT~\cite{you2023castling}, SimA~\cite{koohpayegani2024sima}, SwiftFormer~\cite{shaker2023swiftformer}, Mamba~\cite{gu2023mamba, zhu2024visionmamba} (approx.)    \\ 
% \multirow{1}{*}{Degree-N} & Hyena~\cite{poli2023hyena}    \\ \hline
% \multirow{1}{*}{Rational Function} & Standard self-attention~\cite{vaswani2017attention, dosovitskiy2020image} (approx.)    \\ \hline
% \end{tabular}
% \vspace{3pt}
% \caption{\small Existing attention methods that are within \padre framework or can be approximated by \padre.}
% \label{tab:padre_equiv}
% \end{table}
\subsection{Computational Characteristics}
\vspace{-4pt}
We briefly analyze the number of parameters and computational cost of constructing an output $P \in \mathbb{R}^{N\times D}$ given an input $X \in \mathbb{R}^{N\times D}$ using \padre. First, let us consider the number of parameters. We find $ \mathcal{O}(  N \cdot d ) $ parameters for weight matrices $\{A_i\}$ and $\{C_i\}$, $ \mathcal{O}( D \cdot d ) $ parameters for weight matrices $\{B_i\}$ and $\{D_i\}$,\footnote{We assume a linear number of parameters is sufficient to characterize the matrices.} and $\mathcal{O}(N \cdot D \cdot d)$ for the tensor $W$. This results in a total of
\begin{equation}
    \mathcal{O}(  N \cdot D \cdot d )
\end{equation}
parameters. For a fixed degree $d$, this scales like $\mathcal{O}(N \cdot D)$ which is linear in the size of the input. Therefore, the \emph{memory cost is linear} in the size of the input.

As for the computational cost, we find that computing all $Y_i$'s costs $\mathcal{O}( N  \cdot D \cdot d) $ FLOPs using the assumed structure.\footnote{We assume the structure allows us to perform matrix-vector products in linear time.} Further, each quantity $Y_i$ may be computed in parallel (data parallelism). 
Then, computing all $Z_i$'s carries a cost of $\mathcal{O}( N \cdot D \cdot d)$ FLOPs, which are computed sequentially based on the current scheme. Finally, computing the output $P$ comes at a cost of $\mathcal{O}(N \cdot D \cdot d)$ FLOPs. 
In total, the cost of computing the output $P$ from the input $X$ is therefore,
%(with fixed degree and ranks as above),
\begin{equation}
    \mathcal{O}(N \cdot D \cdot d),
\end{equation}
which is also linear in the size of the input for a fixed degree $d$. 

We note the scheme only leverages highly-efficient matrix operations and not complex functions (e.g., $\mathrm{exp}(\cdot), \, \mathrm{cos}(\cdot) $), which are often expensive on hardware and can cause quantization complications.

\section{Experiments}
\vspace{-6pt}
In this section, we comprehensively evaluate \padre as a drop-in replacement of the standard self-attention, in terms of both accuracy and efficiency. More specifically, we implement a specific instance of \padre following the design principles laid out in Section~\ref{sec:framework}, and use it to substitute the self-attention in transformer models in several vision applications, such as image classification, single-image object detection, point cloud object detection, and multi-camera depth estimation. We then measure the latency of our \padre instances on two compute platforms: a server GPU and a mobile Neural Processing Unit (NPU), and compare with the standard self-attention. We show that \padre provides similar or better accuracy on all the tasks, while being significantly faster, especially when the number of input tokens is large.

% on several computer vision applications, including image classification, single-image 2D object detection, 3D point cloud object detection, and multi-camera depth estimation. In addition, we measure the runtime of \padre on actual hardware platforms, including a server GPU and a mobile Neural Processing Unit (NPU), and compare with the standard self-attention. We further conduct ablation studies on different aspects of our design. 

\subsection{Implementation of \padre}\label{sec:exp_imp}
\vspace{-4pt}
As illustrated in Fig.~\ref{fig:network}, given an input tensor $X$, we first apply channel mixing and token mixing operations to generate $Y_i$, for each $i\in\{1,...,d\}$; these mixing operations correspond to the linear transformations in Eq.~\ref{eq:linear transformation}. More specifically, we implement channel mixing using pointwise convolutions or linear layers. Token mixing is implemented with convolutions applied in the token dimension. When the input possesses an inherent 2D structure (e.g., image features), we use 11x11 2D convolutions to mix the tokens, after reshaping them into a 2D image format. On the other hand, when the input does not present a dense 2D structure, such as sparse 3D voxels from a point cloud, we treat them as a sequence of tokens and use 1D convolutions with kernel size 11 for mixing.

Next, we compute $Z_i$ for each degree $i$ based on $Y_i$ and $Z_{i-1}$, as described in Eq.~\ref{eq: hadamardi}. Channel and token mixings are further applied to $Z_i$ before it is used to produce degree-$(i+1)$ terms. Finally, all the $Z_i$'s are added together to generate the output $P$. Note that unlike the standard multi-head attention and existing alternative attention methods, we use a single head in all our experiments.\footnote{While using multiple heads is possible and straightforward to implement, we found that a single head was sufficient for \padre to match the accuracy of multi-head attention models.}

\begin{figure}[!htbp] % will polish
    \centering  
    \includegraphics[width=0.96\textwidth]{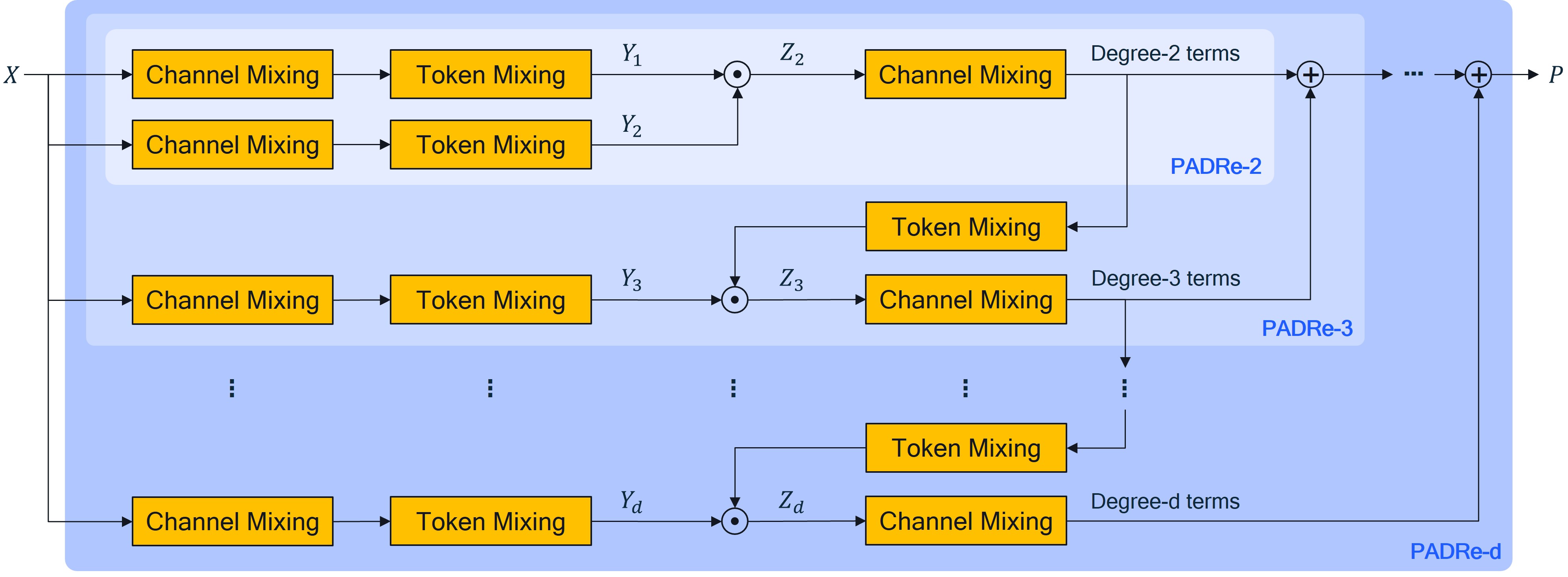}
    \vspace{-8pt}
    \caption{\small Our implementation of a \padre-based attention drop-in replacement module. In our experiments, we use this to substitute the standard self-attention parts in existing transformer models. Note that this implementation approach is just a specific instance of our general \padre framework.}
    \label{fig:network}
    \vspace{-3pt}
\end{figure}

In this specific instance, we do not have a degree-$1$ term in the summation that generates $P$. This is because there already exists a skip connection from $X$ to the output of the attention block in transformer, which provides a degree-$1$ term. In addition, we do not require an explicit degree-$0$ term (i.e., $L$ in Eq.~\ref{eq: P}) in the final summation, since the final convolutional layers provide the bias terms. 

% \subsection{Experimental Setup}

% \noindent \textbf{Datasets and Evaluation:} We perform standard image classification evaluation on ImageNet~\cite{russakovsky2015imagenet}. We further use classification networks as encoders for monocular depth estimation on NYU Depth v2~\cite{silberman2012indoor} and KITTI~\cite{kitti}, where we use standard depth accuracy metrics such as Absolute Relative Error (Abs Rel) and and $\delta < 1.25$; see~\cite{eigen2014depth} for mathematical definitions of these metrics. We perform 2D object detection on COCO~\cite{lin2014microsoft} and 3D object detection on nuScenes~\cite{caesar2020nuscenes}, where we use standard metrics such as mean average precision (mAP).

% For ImageNet classification, we use DeiT~\cite{touvron2021training} as the baseline and replace the multi-head attention module with \padre. For 2D object detection, we use DETR~\cite{carion2020end} as the baseline and replace the multi-head self-attentions in the encoder with \padre. For 3D object detection, we use DSVT~\cite{wang2023dsvt} as the baseline and replace the self-attentions with \padre.

\subsection{Performance Evaluation on Vision Applications}
\label{sec:perf_eval}
\vspace{-4pt}
\textbf{Image Classification:} 
First, we consider the standard image classification task on ImageNet~\cite{russakovsky2015imagenet}.\footnote{ImageNet terms of use can be found here: \url{https://www.image-net.org/download}.} We use DeiT~\cite{touvron2021training} as the baseline, including the tiny, small, and base network configurations, and replace the multi-head attentions with \padre. We follow the default DeiT training setting (300 epochs without distillation) to train our models. Each training experiment takes $\sim$2 days on 8 NVIDIA A100 GPUs. We use the top-1 accuracy to measure classification performance.\footnote{We use the mmpretrain repository: \url{https://github.com/open-mmlab/mmpretrain} for model training and testing. Both mmpretrain and DeiT codebases are under Apache License 2.0.}

Table~\ref{tab:imagenet} summarizes the results. It can be seen that by replacing self-attention with our \padre instance, we achieve similar or better classification performance. Further, \padre can run significantly faster on actual hardware platforms and scales much more efficiently w.r.t. increasing number of tokens (see Section~\ref{sec:exp_latency}).

Table~\ref{tab:imagenet} also compares \padre with several recently proposed alternative attention methods, using DeiT baseline and the default DeiT training procedure for a fair comparison. We see that several proposed methods, such as Hydra attention~\cite{bolya2022hydra} and linear-angular attention~\cite{you2023castling}, fail to maintain accuracy. Other methods do improve accuracy, but are not as efficient as \padre on actual compute platforms (Section~\ref{sec:exp_latency}).

% Table~\ref{tab:imagenet} shows the results on the ImageNet classification benchmark. We evaluate the performance of \padre using the tiny, small and base configurations of DeiT and compare the top-1 accuracy and model size with a wide range of attention mechanisms proposed in the literature. Our results show that the proposed implementation of \padre can deliver similar or better performance than existing attention mechanisms while utilizing hardware friendly and low-latency operations. This is further corroborated in the latency evaluation on hardware platforms where \padre significantly outperforms other attention mechanisms in terms of on-device latency.

% Our results show that the proposed implementation of \padre can deliver similar or better performance than existing attention mechanisms while utilizing hardware friendly and low-latency operations. This is further corroborated in the latency evaluation on hardware platforms where \padre significantly outperforms other attention mechanisms in terms of on-device latency.

% \begin{table}[h!]
% \centering
% \begin{tabular}{|l|l|l|}
% \hline
% Model               & Top-1 Acc & Top-5 Acc \\ \hline
% \padre Degree-2 &           &           \\ \hline
% \padre Degree-3 &           &           \\ \hline
% \end{tabular}
% \caption{\padre results on ImageNet classification using DeiT small}
% \label{tab:imagenet}
% \end{table}

\begin{table}[h!]
\small
  \centering
  \begin{tabular}{l|l|cccc}
    \hline
    \textbf{Baselines} &\textbf{Attention Mechanisms} & Top-1 Accuracy$\,\uparrow$ & FLOPs (G) & \#Params (M) \\
    \hline
    \multirow{9}{*}{DeiT-Tiny~\cite{touvron2021training}} &
    \ccg Standard self-attention~\cite{dosovitskiy2020image} &\ccg 72.2 &\ccg 1.3 &\ccg 5\\
    & Hydra attention~\cite{bolya2022hydra} & 68.3 & 1.1 & 6 \\
    & Efficient attention~\cite{shen2021efficient} & 70.2 & 1.1 & 6\\
    & Linear-angular attention~\cite{you2023castling} & 70.8 & 1.1 & 6\\
    % & Identity-kernel attention + pooling~\cite{babiloni2023linear} & 71.5 & - & 5 \\
    & Enhanced linear attention~\cite{cai2024efficientvit} & 72.9 & 1.1 & 6 \\
    & k-NN attention~\cite{wang2022kvt} &73.0 &1.3 &6 \\
    & Focused linear attention~\cite{han2023flatten} & 74.1 & 1.1 & 6 \\
    & Vision Mamba~\cite{zhu2024visionmamba} & 76.1 & 2.6 & 7\\
    & \ccb \padre-2 (ours) & \ccb 74.5 & \ccb 1.2 & \ccb 5 \\
    & \ccb \padre-3 (ours) & \ccb76.5 & \ccb 1.7 &  \ccb 7 \\
    % & \ccb \padre-4 (ours) & \ccb & \ccb & \ccb \\
    \hline
    \multirow{5}{*}{DeiT-Small~\cite{touvron2021training}} &\ccg Standard self-attention~\cite{dosovitskiy2020image} &\ccg 79.8 &\ccg 4.6  &\ccg 22 \\
    % & Convolution~\cite{liu2022convnet} & 79.7 &4.3 & 22 \\
    & SimA~\cite{koohpayegani2024sima} & 79.8 & 4.6  & 22 \\
    & k-NN attention~\cite{wang2022kvt} &80.1 &4.6 &22 \\
    & Vision Mamba~\cite{zhu2024visionmamba} &80.5 & 9.6 & 26 \\
    & \ccb\padre-2 (ours) &\ccb 80.4  &\ccb 4.8 &\ccb 20 \\
    & \ccb\padre-3 (ours) &\ccb 80.5  &\ccb 6.1 &\ccb 26 \\
    \hline
    \multirow{4}{*}{DeiT-Base~\cite{touvron2021training}} &\ccg Standard self-attention~\cite{dosovitskiy2020image} &\ccg 81.8 &\ccg  17.6 &\ccg 86\\
    % & Convolution~\cite{liu2022convnet} &82.0 &16.9 & 87  \\
    & Hydra attention~\cite{bolya2022hydra} &76.4 &16.9 & - \\
    & Hyena~\cite{poli2023hyena} & 78.5 & - & 87\\
    & \ccb \padre-2 (ours) &\ccb 81.4  &\ccb 18.8 &\ccb 80 \\
    \hline
  \end{tabular}
  \vspace{3pt}
  \caption{\small Results on ImageNet-1K classification. By replacing standard self-attention with our \padre instance, we achieve similar or better accuracy when comparing to the DeiT baseline and recently proposed alternative attention methods.}
  \label{tab:imagenet}
  \vspace{-5pt}
\end{table}
% convolution is convnext isotropic for fair comparison

% Note that there exists many works that develop the full image classification backbone, e.g.,~\cite{li2022efficientformer, shaker2023swiftformer}. While they have higher classification accuracy thanks to the holistic designs, these models cannot be directly used when the input is not a 2D image or when only a few transformer layers (rather than a full feature extractor) are required. 

\textbf{Single-Image Object Detection:} 
In this part, we consider a commonly used, end-to-end transformer-based object detector: DETR~\cite{carion2020end}.\footnote{We use the official DETR repository: \url{https://github.com/facebookresearch/detr}, which is under  Apache License 2.0.} DETR consists of three main components: a ResNet backbone for image feature extraction, a transformer encoder with 6 (self-attention) layers, and a transformer decoder with 6 (self- and cross-attention) layers. In this application, we replace the self-attention in the 6 encoder layers with our \padre instance. We train and evaluate our DETR with \padre replacement on the COCO detection benchmark~\cite{lin2014microsoft}.\footnote{COCO is under a CC BY 4.0 license and the terms of use can be found here: \url{https://cocodataset.org/\#termsofuse}.} We follow the short-schedule~\cite{carion2020end} to train the model for 300 epochs on 8 NVIDIA A100 GPUs, which takes $\sim$4 days. We use the standard metric of Average Precision (AP) to measure detection performance.\footnote{See \url{https://cocodataset.org/\#detection-eval} for more details on COCO object detection metrics.}

The results in Table~\ref{tab:coco} show that by substituting the standard self-attention with \padre in the transformer encoder layers, we achieve a similar detection accuracy. In the encoder part, our \padre instance also significantly reduces the computational cost by 55\%.

% To assess the effectiveness of \padre, we apply it to DETR, an end-to-end object detection model designed for single-image 2D object detection. The dataset used is the COCO dataset~\cite{lin2014microsoft}.
% The baseline DETR model employs an ImageNet-pretrained ResNet-50 as its backbone and is fine-tuned on the COCO 2D object detection data. To directly compare the effectiveness of the attention mechanism, we train an alternative model by replacing the multi-head self-attentions (MHSA) with \padre. The results in Table~\ref{tab:coco} demonstrate that this simple substitution significantly reduces the number of FLOPs while maintaining a comparable detection average precision (AP) of less than 0.5. This encouraging result shows that \padre is a more hardware-friendly attention mechanism solution. 

\begin{table}[h]
\centering
\small
\begin{tabular}{l|l|cccc}
\hline
\textbf{Model} & \textbf{Attention Mechanisms}          & AP$\,\uparrow$ & FLOPs (G)& \#Params (M) \\ \hline
\multirow{2}{*}{DETR~\cite{carion2020end}} &\ccg Standard self-attention~\cite{dosovitskiy2020image} &\ccg 40.6        &\ccg  10.1  &\ccg 6.1           \\
&\ccb \padre-2 (ours) &\ccb 40.2                &\ccb  4.5   &\ccb 2.7           \\ \hline
% \padre Degree-3 &    &            &            \\ \hline
\end{tabular}
\vspace{3pt}
\caption{\small Results on COCO detection, using DETR as the baseline. We achieve a similar accuracy but with a considerably lower computational cost.} 
% \hc{Recommend only show the encoder FLOPs, since other parts are exactly same}} 
% \hc{point out resnet is bulk of flops}
\label{tab:coco}
\vspace{-15pt}
\end{table}

% \subfloat[caption]{\begin{tabular}{...}...\end{tabular}}
% \quad
% \subfloat[caption]{\begin{tabular}{...}...\end{tabular}}

\textbf{Point Cloud Object Detection:} 
Next, we evaluate \padre on a more complex vision application where the input is no longer a single image. In this case, the input is a 3D point cloud. We consider the 3D point cloud object detection task on nuScenes~\cite{caesar2020nuscenes}, which involves detecting vehicles and other traffic elements based on a LiDAR point cloud.\footnote{nuScenes is under a CC BY-NC-SA 4.0 license and the terms of use can be found here: \url{https://www.nuscenes.org/terms-of-use}.} We select a transformer-based 3D object detection model, DSVT~\cite{wang2023dsvt}, which achieves state-of-the-art performance on nuScenes.\footnote{We use the official DSVT repository: \url{https://github.com/Haiyang-W/DSVT}, which is under Apache License 2.0.} In DSVT, a 3D transformer backbone extracts point cloud features, after which the features are projected to the birds-eye-view and further processed by 2D networks and a final detection head. We replace the self-attention in the 3D backbone with \padre and follow the same training procedure as in~\cite{wang2023dsvt}; training takes $\sim$2 days on 4 NVIDIA V100 GPUs. We use standard nuScenes metrics to measure detection performance, including mean average precision (mAP) and nuScenes Detection Score (NDS); NDS is computed as a weighted sum of 6 detection metrics over 10 object classes (see~\cite{caesar2020nuscenes} for more details on the metrics). As the 3D voxels are sparse and cannot be arranged in a dense 2D format, we treat them as a sequence and use 1D convolutions to perform token mixing.

Table~\ref{tab:nuscenes} summarizes the 3D object detection results on nuScenes validation set. After replacing the self-attentions with \padre, we achieve similar detection accuracy (in terms of both NDS and mAP) as compared to the original model. 
% By increasing the polynomial degree (from 2 to 3), \padre enables stronger modeling capacity, which leads to a higher accuracy. 
We then measure the computational costs of the 3D transformer backbones in the baseline and our model. It can be seen that our modified DSVT 3D backbone uses 28\% fewer FLOPs while maintaining detection accuracy as compared to the baseline. 
% When using degree-$3$ \padre, our model outperforms the baseline and uses fewer FLOPs.

% 3D object detection with PADRe is evaluated on nuScenes~\cite{caesar2020nuscenes} dataset. We evaluate the detection accuracy based on standard metrics nuScenes detection score (NDS) computed as a weighted sum of mean average precision (mAP), mean average translation error, mean average scale error, mean average orientation error, mean average velocity error and mean average attribute error across 10 classes. The SOTA method DSVT~\cite{wang2023dsvt} is employed as the baseline model.we replace self-attention with proposed PADRe Degree-2 and PADRe Degree-3. To address the sparsity of the input voxel features (which is not dense 2D features), PADRe modules leverage 1D convolutions to process the spatial sequence of these sparse voxel features.

% Table~\ref{tab:nuscenes} summarizes 3D object detection comparison results on nuScenes validation set in terms of NDS, mAP, GFLOPs, and Parameters.  Based on the NDS result, proposed PADRe Degree-2 has comparable detection accuracy but with 27\% less computation cost (GFLOPs). And the PADRe Degree-3 has the best detection accuracy with similar computation cost. 

\begin{table}[h]
\centering
\small
\begin{tabular}{l|l|cccc}
\hline
\textbf{Baseline} & \textbf{Attention Mechanisms}          & NDS$\,\uparrow$& mAP$\,\uparrow$ &  FLOPs (G) & \#Params (M)  \\ \hline
\multirow{2}{*}{DSVT~\cite{wang2023dsvt}} &\ccg Standard self-attention~\cite{dosovitskiy2020image} &\ccg 71.1        &\ccg    66.4&\ccg 14.9  &\ccg 1.1    \\
&\ccb\padre-2 (ours)       &\ccb 71.1        &\ccb    66.0&\ccb 10.7  &\ccb 0.9   \\ 
\hline % maybe discuss d3 in text
% &\padre-3 (ours)       & 71.2        &    66.6& 14.8  & 1.2  \\ \hline
\end{tabular}
\vspace{3pt}
\caption{\small Results on nuScenes LiDAR-based 3D object detection. We achieve a similar accuracy with a significantly lower computational cost.}
\label{tab:nuscenes}
\vspace{-15pt}
\end{table}

% \textbf{Multi-view perception:} 
% Multi-camera depth estimation on nuScenes~\cite{caesar2020nuscenes}. \hc{Write when results come in.}

% \begin{table}[h]
% \centering
% \small
% \begin{tabular}{l|l|ccc}
% \hline
% \textbf{Baseline} & \textbf{Attention Mechanisms}          & Abs Rel$\,\downarrow$ & FLOPs (G) & \#Params (M) \\ \hline
% \multirow{2}{*}{SurroundDepth~\cite{wei2023surrounddepth}} & Standard self-attention~\cite{dosovitskiy2020image} &    &            &            \\
% & \padre-2 (ours) &    &                     &            \\ \hline
% % \padre Degree-3 &    &            &            \\ \hline
% \end{tabular}
% \vspace{3pt}
% \caption{\small \padre results on}
% \label{tab:surrounddepth}
% \end{table}

\subsection{Latency Evaluation on Hardware Platforms}\label{sec:exp_latency}
\vspace{-4pt}
We evaluate and compare the on-device latencies of our \padre instance and the standard self-attention. Our goal is to provide a practical comparison of computational efficiency across different hardware platforms. Specifically, we measure latencies on two compute platforms:
1) Nvidia A100 GPU, and 2) Qualcomm\textsuperscript{\scalebox{.6}{\faRegistered}} Hexagon$^{\scalebox{.4}{\text{TM}}}$ NPU (included in the Snapdragon\textsuperscript{\scalebox{.6}{\faRegistered}} 8 Gen 3 processor on Samsung S24),\footnote{Snapdragon and Qualcomm branded products are products of Qualcomm Technologies, Inc. and/or its subsidiaries. Qualcomm patented technologies are licensed by Qualcomm Incorporated.} which is an AI accelerator specialized for neural network workloads. On the NPU, we quantize the network weights and activations to Int8 when measuring latency. 
% In this section, we delve deeper into the comparative analysis of on-device latencies between \padre and the conventional standard self-attention mechanism. Our goal is to provide a practical comparison of computational efficiency across different hardware platforms. Specifically, we measure the actual latencies on two compute platforms:
% 1) Nvidia A100 GPU, and 2) Qualcomm\textsuperscript{\scalebox{.6}{\faRegistered}} Hexagon$^{\scalebox{.4}{\text{TM}}}$ NPU (included in the Snapdragon\textsuperscript{\scalebox{.6}{\faRegistered}} 8 Gen 3 processor on Samsung S24),\footnote{Snapdragon and Qualcomm branded products are products of Qualcomm Technologies, Inc. and/or its subsidiaries. Qualcomm patented technologies are licensed by Qualcomm Incorporated.} which is an AI accelerator specialized for neural network workloads. 

% To ensure fair comparisons for real-world scenario, we quantize the neural network weights and activations to Int8 during operations on mobile devices. As our focus centers around evaluating \padre as a drop-in replacement for standard self-attention, we specifically only evaluate the ``attention'' component executed on-device. 

% Since our objective is to assess \padre as a drop-in replacement option for the standard self-attention, we evaluate only the ``attention'' part on device. 
% from here I need to proof read again
Since our objective is to replace the standard self-attention, we specifically evaluate only the ``attention'' components on-device. More specifically, for \padre, we run the instance described in Section~\ref{sec:exp_imp} and shown in Fig.~\ref{fig:network}, and for the standard transformer, we run a multi-head attention operation. Using the hyperparameters of DeiT-Tiny, we set the embedding dimension to 192 in both \padre and self-attention, and use 3 heads in the multi-head attention. Note that we only use a single head in our \padre instance in all our experiments. 
We evaluate on different numbers of input tokens, including: 256, 1024, 2304, and 4096, which correspond to input image resolutions of 256$\times$256, 512$\times$512, 768$\times$768, 1024$\times$1024, assuming a 16$\times$16 patchification (as used in ViT).
% More specifically, for \padre, we run the network shown in Fig.~\ref{fig:network} and for the standard transformer, we run a multi-head attention operation. Following hyperparameters of ViT-Tiny, we set the channels to be 192 and head to 3 in the multi-head attention. We evaluate on different numbers of input tokens, including: 256, 1024, 2304, and 4096, which correspond to input image resolutions of 256$\times$256, 512$\times$512, 768$\times$768, 1024$\times$1024, assuming a 16$\times$16 patchification (used in ViT).

Fig.~\ref{fig:computation} shows the on-device latencies w.r.t. number of tokens. We see that the standard attention mechanism exhibits a significant latency growth as the number of tokens increases. In contrast, our proposed \padre maintains linear latency proportional to the number of tokens. This confirms that \padre provides a significant efficiency advantage over self-attention when deployed on actual hardware, including large GPUs and resource-constrained mobile NPUs. 

\begin{figure}[!htbp]
    \vspace{-4pt}
    \centering
\includegraphics[width=0.99\textwidth]{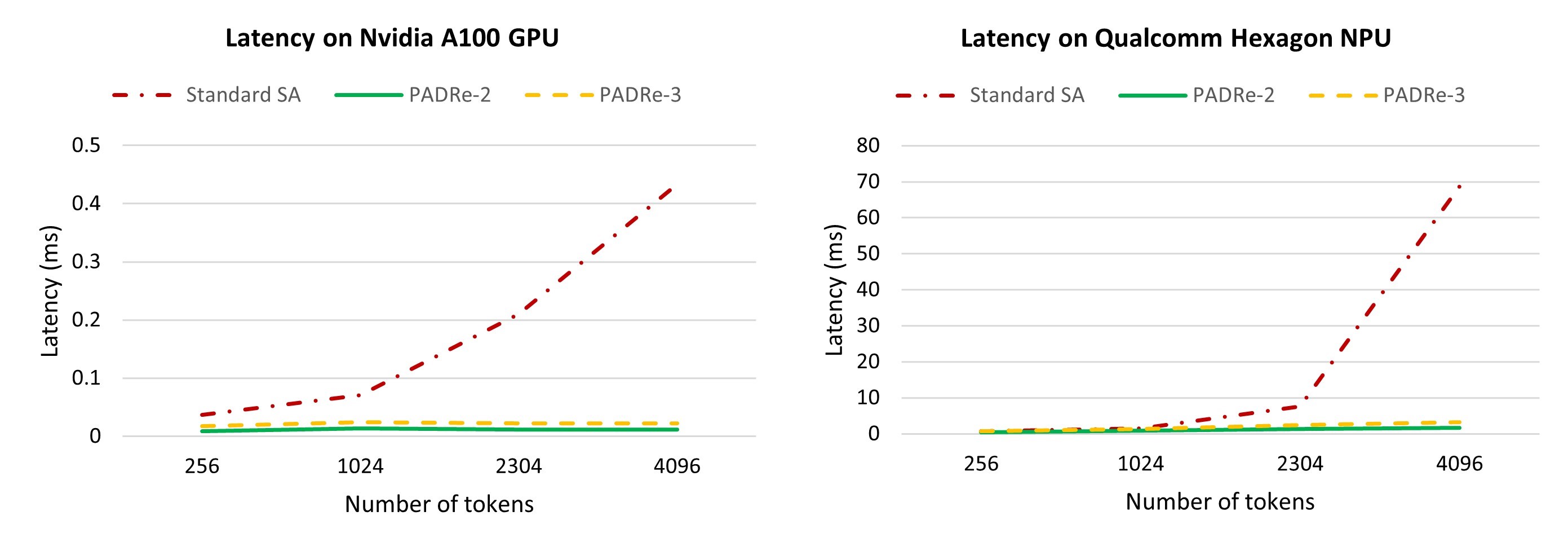}
    \vspace{-12pt}
    \caption{\small On-device (GPU and NPU) latency comparison for \padre vs. standard self-attention. It can be observed that the latency of self-attention escalates significantly with an increase in the number of input tokens. In contrast, \padre demonstrates a linear growth pattern.}
    \label{fig:computation}
    \vspace{-6pt}
\end{figure}

\subsection{Ablation Study on Polynomial Degree in \padre}
\vspace{-4pt}
In this study, we scale up the polynomial degree of our \padre instance following the implementation in Section~\ref{sec:exp_imp}, and show its effect on the model performance and computational costs. Table~\ref{tab:ablation} shows that by increasing the polynomial degree, the network obtains stronger modeling capacity, leading to performance improvements. Specifically, when using degree $3$, \padre has a significantly better accuracy over the baseline model, while being more computationally efficient on device. By further increasing the degree beyond 3, we observe additional small accuracy gains.

\begin{table}[h!]
\small
  \centering
  \begin{tabular}{l|l|cc}
    \hline
    \textbf{Baseline} &\textbf{Attention Mechanisms} & Top-1 Accuracy$\,\uparrow$  \\
    \hline
    \multirow{5}{*}{DeiT-Tiny~\cite{touvron2021training}} &
    \ccg Standard self-attention~\cite{dosovitskiy2020image} &\ccg 72.2 \\
    & \ccb \padre-2 (ours) & \ccb 74.5  \\
    & \ccb \padre-3 (ours) & \ccb 76.5  \\
    & \ccb \padre-4 (ours) & \ccb 76.9  \\
    % & \ccb \padre-5 (ours) & \ccb  \\
    \hline
  \end{tabular}
  \vspace{3pt}
  \caption{\small ImageNet classification accuracy when scaling up \padre degree.}
  \label{tab:ablation}
  \vspace{-10pt}
\end{table}

Table~\ref{tab:profiling on hw} shows the computational costs of our \padre instance with different degrees, and compares those with the standard self-attention and other recently-proposed efficient attention methods such as linear-angular attention~\cite{you2023castling} and focused linear attention~\cite{han2023flatten}. We follow the same setup in the previous section, where we set the channel to 192, use 3 heads in self-attention and existing efficient attentions, and 1 head in our \padre instance. The number of input tokens is fixed to be $4096$.

Increasing the degree in \padre allows for efficiently scaling up model capacity and performance. Even with degree-$4$, \padre remains significantly faster than self-attention, especially on the mobile NPU. While existing efficient attentions use fewer FLOPs, some can have higher latency on device. It is noteworthy that FLOPs is not an accurate indicator of model efficiency on actual hardware; similar observations have also been made in other works, e.g.,~\cite{chen2023run}. For instance, operations that are not well optimized or cause memory bottlenecks on device are not reflected by FLOP count. 

\begin{table}[h!]
\centering
\small
\begin{tabular}{l|ccc}
\hline
\textbf{Attention Mechanisms}  & FLOPs (G) & Latency on GPU  (ms) & Latency on NPU (ms)\\
\hline
\ccg Standard self-attention~\cite{dosovitskiy2020image} &\ccg  1.31  &\ccg 0.43 &\ccg 68.60
 \\
Linear-angular attention~\cite{you2023castling}  &  1.22  &   0.53   & -  \\
Focused linear attention~\cite{han2023flatten}   &  1.25  &   0.08   & - \\
\ccb \padre-2 (ours)                         &\ccb  1.10  &\ccb 0.01 &\ccb 1.62 \\ 
\ccb \padre-3 (ours)                         &\ccb  2.28  &\ccb 0.02 &\ccb 3.17 \\ 
\ccb \padre-4 (ours)                         &\ccb  3.27  &\ccb   0.04   &\ccb 4.22   \\ 
% \padre-5 (ours)                         &  4.26  &   0.20   & 5.21  \\ 
\hline
\end{tabular}
\vspace{3pt}
\caption{\small FLOPs and on-device (both GPU and NPU) latency comparison of standard self-attention, recent efficient attention methods, and \padre of various degrees.\protect\footnotemark~The measurement is based on the attention parts only, with 4096 input tokens.}
\label{tab:profiling on hw}
\end{table}
\vspace{-12pt}
\footnotetext{We do not report the latencies of these alternative attention methods on NPU, as it would require significant effort to rewrite the implementation in order to run them on the mobile NPU.}

% \textbf{Different token mixing schemes:}

% =======================================================================
% =======================================================================
% =======================================================================
\section{Discussion}
\label{sec:discussion}
\vspace{-6pt}
\textbf{Limitations and Future Work:} In this paper, we have extensively studied using \padre to replace the standard self-attention and have shown that this is more computationally efficient and maintains model performance. However, we did not present empirical results of its application to cross-attention and/or multi-modal operations. In principle, \padre can be extended for cross-attention, a discussion of which is provided in Appendix~\ref{sec:multimodal}; this would be interesting as part of future work.

We observe that model performance begins to saturate when scaling up the degree of our implemented \padre instance beyond 3 (see Table~\ref{tab:ablation}). We have yet to establish the source of this phenomenon. One possibility may have to do with the numerical stability of monomials. In future work, we will investigate the use of different, more stable, polynomial bases, such as orthogonal polynomials. 

\textbf{Broader Impacts:} Our proposed \padre approach provides an efficient replacement to the standard self-attention, which requires extensive computation and memory costs, especially when the input size is large (e.g., high-resolution images and videos). By reducing the computation and memory footprints of large AI models, \padre can generate positive societal impacts, such as reducing energy consumption and making AI models more accessible to users with limited computational resources. 
During the study, several experiments failed to finish and could not make it into the paper (due to server issues, lack of time, etc.), which caused extra energy consumption.

% =======================================================================
% =======================================================================
% =======================================================================
\section{Conclusions}
\vspace{-6pt}
In this paper, we proposed a unified, polynomial drop-in attention replacement framework, \padre, which leverages polynomials to replace the expensive standard self-attention in transformers. More specifically, we presented the key components for constructing an expressive, efficient, and hardware-friendly scheme to substitute self-attention. In addition, \padre is a general framework that encompasses many recently-proposed alternative attention schemes; we provided detailed proofs to show this. In order to demonstrate the efficacy of \padre on practical applications and actual hardware platforms, we implemented a specific instance of \padre based on the design principles, using efficient and hardware-friendly operations.  Through extensive experiments, we showed that \padre provides an efficient replacement of self-attention that maintains model performance and incurs much lower computational costs. Specifically, we showed that our \padre instance runs significantly faster on both a power GPU (NVIDIA A100) and a resource-constrained mobile NPU (Qualcomm Hexagon tensor processor), by as much as 43$\times$ when comparing to the standard self-attention.   

% In this paper, we proposed the unified \padre framework, which leverages polynomials to replace the standard self-attention in transformers and generalizes many existing schemes. We have presented a specific implementation of \padre,  utilizing efficient and hardware-friendly operations. Through extensive experiments, we have shown \padre serves as an effective and efficient drop-in replacement for the standard self-attention, incurring only linear computation and memory costs. We have benchmarked \padre on both GPU and NPU, evaluating the on-device latency and observing that by utilizing hardware friendly operations, \padre achieves significant speedups.

\bibliographystyle{plain}
\bibliography{main}

%%%%%%%%%%%%%%%%%%%%%%%%%%%%%%%%%%%%%%%%%%%%%%%%%%%%%%%%%%%%
\newpage
\appendix

\section{\padre Generalizations}
\label{appendix:equivalence}

In this section, we present a few generalization of the framework, namely, generalization to the cross-attention mechanism, as well as a rational version of the framework.

\subsection{Multi-Modal \padre (i.e., Cross-Attention) }
\label{sec:multimodal}
\padre can be generalized to multi-modal input in many ways. We present one here. Imagine that $X^{(a)} \in \mathbb{R}^{B\times N \times D} $ represents input from one mode and  $X^{(b)} \in \mathbb{R}^{B\times M \times D} $ represents input from another mode. Then, we can define $\hat{Y}^{(a)}_i $ and $\hat{Y}^{(b)}_i $ for each mode just as before (linear transformations should be chosen so that they are of equal sizes). To capture nonlinear interactions among modes, the difference lies in the computation of $\hat{Z}_j$. In this case, we write,
\begin{align}
    Z_j = \odot_{i=1}^j Y^{(c_i)}_i,
\end{align}
where $c_i \in \{a,b\}$ for each $i$. For instance, if the degree is $d=3$ and $c = (a,a,b)$, then one finds that,
\begin{align}
    Z_j &= Y^{(a)}_0\\
    Z_j &= Y^{(a)}_0 \odot Y^{(a)}_1  \\
   Z_j &= Y^{(a)}_0 \odot Y^{(a)}_1  \odot Y^{(b)}_2.
\end{align}
It is important for the sequence is not trivial, i.e., to contain cross-terms, not $(a,a,a)$ nor $(b,b,b)$. This will capture nonlinear interactions between modes. Also, note that more than two modes could be involved, and that multiple sequences may be considered.

\subsection{Rational Approximation}
\label{appendix:rational}
This section describes a generalization of \padre. Although we found that the standard \padre based on polynomial approximations offered great performance in the context of ViT, it is expected that certain applications will benefit from this more powerful framework. In particular, the following approach is inspired by the improved effectiveness of \padre (rational) approximants compared with polynomial (Taylor) approximants. 

The main difference is that the drop-in replacement takes the form of a \emph{rational functions} rather than a \emph{polynomial function}. As discussed in \cite{}, in the one-dimensional case, the former have much higher power than the latter. Further, from an intuitive perspective, adding flexibility to the denominator allows one to better capture "poles" (or areas of high variability) often observed when using \texttt{softmax}. Finally, the introduction of a function at the denominator might be seen as a replacement for the normalization generally used prior to the attention mechanism.

From a technical standpoint, we construct a degree-$[d/e]$ rational \padre as follows: just as before, we let,
 \begin{equation}
     Y_i = A_i \, X \, B_i
 \end{equation}
 for $i=1, ..., d+e$, and,
 \begin{align}
    \mathcal{K}_j &= \odot_{i=1}^j \hat{Y}_i, \\
    \mathcal{L}_k &= \odot_{i=1}^k \hat{Y}_{d+i},
\end{align}
for $j=1, ..., d$ and $k=1,...,e$. Then, we define
\begin{align}
    \mathcal{N}_{a,b} &= \sum_{j=1}^d W_{a,b,j} \,  [\mathcal{K}_j]_{a,b} + V_{a,b} \\
    \mathcal{D}_{a,b} &= \sum_{k=1}^e Q_{a,b,j} \,  [\mathcal{L}_j]_{a,b} + P_{a,b}.
\end{align}
Finally, the output (prior to resizing) takes the form (i.e., element-wise "Hadamard division"),
\begin{equation}
    O_{a,b} = \frac{\mathcal{N}_{a,b}}{\mathcal{D}_{a,b}}
\end{equation}
Finally, note that care should be taken at training time to avoid a vanishing denominator (and ensuing numerical instability). This may be done by, for example, squaring the denominator and adding a small regularizing addition term.

\section{\padre - Special cases} \label{app: special cases}
\subsection[SimA]{SimA \cite{koohpayegani2024sima}}
\label{appendix:equivalence_sima}
\texttt{SimA} (Simple Attention), as its name suggests, is a simple mechanism for replacing the standard attention. The output $O$ of SimA can be expressed as,
\begin{equation}
    O = \hat{Q} \, \hat{K}^T \, V
\end{equation}
where,
\begin{align}
    \hat{Q}^i = \frac{Q^i}{||Q^i||_1}, \,\hat{K}^i = \frac{K^i}{||K^i||_1},
\end{align}
and $\cdot^i$ indicates the $i^{th}$ column of a matrix ($\ell_1$ column-wise normalization), and where,
\begin{align}
    Q = X \, W_Q,\,  K = X \, W_K,\,  V = X \, W_V,\,
\end{align}
for some appropriate projection matrices as usual. Substitution implies that,
\begin{align}
    O_{m,n} &= \sum_{i=1}^D \hat{Q}_{m,i} \left (  \sum_{j=1}^N  \hat{K}_{j,i} \, V_{j,n} \right ) \\
    &=\sum_{i=1}^D \frac{ [X \, W_Q]_{m,i} }{||Q^i||_1}  \left (  \sum_{j=1}^N   \frac{[X \, W_K]_{j,i} }{||K^i||_1}  \, [X \, W_V ]_{j,n} \right ) \\
    &= \sum_{i=1}^D  \sum_{j=1}^N  \, \frac{1}{||Q^i||_1 ||K^i||_1} \, \left ( \sum_{k_Q=1}^D \sum_{k_K=1}^D \sum_{k_V=1}^D \left ( [W_{Q}]_{k_Q,i}  [W_{K}]_{k_K,i} \, [W_{V}]_{k_V,n} \right) \, \left( X_{m,k_Q} \, X_{j,k_K} \, X_{j,k_V} \right )  \,  \right ) 
\end{align}
This should be recognized as a homogeneous degree-$3$ polynomial with coefficients of the form,
\begin{equation}
    \frac{\left ( [W_{Q}]_{k_Q,i}  [W_{K}]_{k_K,i} \, [W_{V}]_{k_V,n} \right) }{||Q^i||_1 ||K^j||_1} 
\end{equation}
i.e., exhibiting a (normalized) polynomial relationship with the parameters/weights. Therefore, as claimed, SimA is a special case of the \padre framework.

\subsection[Conv2Former]{Conv2Former \cite{hou2022conv2former}}
\label{appendix:equivalence_conv2former}
Given an input $X \in \mathbb{R}^{H\times W \times C}$, \texttt{Conv2Former} replaces the self-attention mechanism with a block of the form,
\begin{align}
    Z &= A \odot V \\
    A &= \mathrm{DConv}_{k \times k} \left ( W_1 \, X\right ) \\
    V &= W_2 \, X
\end{align}
where $Z$ is the output, $\odot$ represents the Hadamard product, $W_1$ and $W_2$ are weight matrices of two linear layers, and $\mathrm{DConv}_{k \times k}$ denotes a depthwise convolution with kernel size $k \times k$, which is really just an efficient linear transformation applied to each channel.

Letting $C$ be a tensor representing the convolution kernel and expanding formulas, we find that,
\begin{align}
      \left [\mathrm{DConv}_{k \times k} \left ( W_1 \, X \right) \right ]_{i,j,k} &= \sum_{p,q} C_{i-p, j-q, k} \, \left ( \sum_{m,n} [W_1]_{(p,q), (m,n)} \, X_{m,n,k} \right ) \\
     \left[ W_2 \, X \right ]_{i,j,k} &= \sum_{s,t} [W_2]_{(i,j), (s,t)} \, X_{s,t,k}
\end{align}
Therefore, it follows that,
\begin{align}
    [ Z ]_{i,j,k} = \left [ A \odot V  \right ]_{i,j,k} &= \left ( \sum_{p,q} C_{i-p, j-q, k} \, \left ( \sum_{m,n} [W_1]_{(p,q), (m,n)} \, X_{m,n,k} \right ) \right ) \cdot \left (  \sum_{s,t} [W_2]_{(i,j), (s,t)} \, X_{s,t,k} \right ) \\
    &= \sum_{m,n} \sum_{s,t} \, \left ([W_2]_{(i,j), (s,t)} \, \sum_{p,q} C_{i-p, j-q, k}  [W_1]_{(p,q), (m,n)} \right ) \, X_{m,n,k}  \, X_{s,t,k}
\end{align}
This should be recognized as a degree-2 (homogeneous) polynomial in the input $X$, and therefore an specific instance of the \padre framework.

\subsection[Hyena]{Hyena \cite{poli2023hyena}}
\label{appendix:equivalence_hyena}
The proposed \texttt{Hyena} operator is defined as follows (Definition 3.1, \cite{poli2023hyena}),
\begin{definition}{\emph{(Order-$N$ Hyena Operator)}}
    Let $(x^0, x^1, ..., x^N)$ be projections of the input and let $h^1, ...,  h^N$ be a set of learnable filters. The Hyena-$N$ operator is defined by the recurrence:
    \begin{align}
        z^1_t &= x^0_t \\
        z^{n+1}_t &=  x^n_t \; (h^n \ast z^n)_t, \;\;  n = 1, ..., N \\
        y_t &= z^{N+1}_t
    \end{align}
\end{definition}
That is, Hyena proposes to use compositions of $N$ element-wise multiplications and convolutions as a drop-in replacement for the attention. Such operators possess these two important properties:
\begin{enumerate}
    \item {\bf Homogeneous polynomial functions}: the order-$N$ Hyena Operator corresponds to the evaluation of a degree-$(N+1)$ homogeneous polynomial (or equivalently $(N+1)$-tensor) which coefficients are non-linear functions of the parameters.\footnote{Higher levels of non-linearity ($N$) increase expressivity, while the global nature of convolutions and projections ensures that all correlations among tokens are taken into account at multiple scales.}
    \item {\bf Fast evaluation}: the non-linear dependence of the coefficients (or equivalent the tensor core elements) on the parameters are such that the operator may be evaluated rapidly (i.e., in quasi-linear time).
\end{enumerate}
More precisely, letting $\chi$ be the original input vector and $n:= (n_0, n_1, ..., n_N) \in \mathbb{N}^{N+1}$ be multi-indices,\footnote{We use $\chi$ in order to differentiate between the Hyena notation which uses $x$ as projections rather than input.} we argue that the order-$N$ Hyena operator may be written as,
\begin{equation}
    y = \sum_{n} \eta_n \, \left (  \prod_{i=0}^N \chi_{n_i} \right )
\end{equation}
for some coefficients $\{\eta_n\}$ that are polynomial functions of the parameters ($\{h^i\}$ and linear projections). This is a a homogeneous polynomial of degree $(N+1)$. In light of these observations, one finds that \emph{Hyena is in fact a special case of \padre}.

We present analysis  of the Order-$N$ Hyena Operator which shows it is a specific instances of \padre. In particular, we argue that the latter really is a homogeneous polynomial of degree $(N+1)$, which may also be interpreted as an $(N+1)$-tensor (multilinear functional/operator). Following the above definition and Remark 3.1 (\cite{poli2023hyena}), an order-$N$ Hyena operator may be expressed as,
\begin{equation}
    y = x^N \cdot (h^N \ast (x^{N-1} \cdot (h^{N-1} \ast (... ))))
\end{equation}
This expression can be expanded as follows,
\begin{align}
    y_{t_N} = x^{N}_{t_N} \,  \sum_{t_{N-1}=0}^L h^{N}_{t_{N} - t_{N-1}} \, \left ( x^{N}_{t_{N-1}} \, \sum_{t_{N-2}=0}^L h^{N-1}_{t_{N-1} - t_{N-2}} \, \left ( ...  \left ( x^{1}_{t_1} \, \sum_{t_0=0}^L h^{1}_{t_1 - t_0} \, x^0_{t_0}\right )  \right ) \right  ) 
\end{align}
Using commutativity to re-arrange the latter, we find that 
\begin{align}
    \label{eq:reinterp_proj}
    y_{t_N} &= \sum_{i=1}^N \, \sum_{t_{i-1}=0}^L \, \left ( \prod_{i=1}^{N} \, h^{i}_{t_i - t_{i-1}} \right ) \,  \left (   \prod_{i=0}^{N} x^{i}_{t_{i}} \right ),
\end{align}
where $t_i \in \{0, ..., L \}$ for every $i$. Now, recall that each vector $x^i$ represents a projection of the input $\chi \in \mathbb{R}^M $  onto a $L$-dimensional space.\footnote{$M$ is used to represent the dimension of the input, whatever it is.} Let us write,
\begin{equation}
    x^i_{t_i} = [P^i \, \chi]_{t_i} = \sum_{m_i =1}^M  P^i_{t_i,m_i} \, \chi_{m_i}.
\end{equation}
Upon substitution, we find that,
\begin{align}
     \prod_{i=0}^{N} x^{i}_{t_{i}} &= \prod_{i=0}^{N} \left (  \sum_{m_i=1}^M  P^i_{t_i,m_i} \, \chi_{m_i} \right ) \\
     &= \sum_{m_0=0}^M ... \sum_{m_N=0}^M \, \left ( \prod_{j=0}^N P^j_{t_j,m_j} \right ) \, \left( \prod_{j=0}^N \chi_{m_j} \right )
\end{align}
and therefore,
\begin{align}
    \label{eq:reinterp}
    y_{t_N} &=   \sum_{i=1}^N \, \sum_{t_{i-1}=0}^L \, \left ( \prod_{i=1}^{N} \, h^{i}_{t_i - t_{i-1}} \right ) \,  \left (    \sum_{m_0=0}^M ... \sum_{m_N=0}^M \, \left ( \prod_{j=0}^N P^j_{t_j,m_j} \right ) \, \left( \prod_{j=0}^N \chi_{m_j} \right )  \right ) \\
    &= \sum_{m_0=0}^M ... \sum_{m_N=0}^M \,  \left ( \sum_{i=1}^N \sum_{t_{i-1}=0}^L \,   \left ( \prod_{i=1}^{N} \, h^{i}_{t_i - t_{i-1}} \, \prod_{j=0}^N P^j_{t_j,m_j}  \right ) \right) \,      \left( \prod_{j=0}^N \chi_{m_j} \right )  \\
    &= \sum_{m_0=0}^M ... \sum_{m_N=0}^M \, \eta_{(m_0, m_1, ..., m_N)} \, \left( \prod_{j=0}^N \chi_{m_j} \right ) 
\end{align}
as claimed. The nonlinear dependence of the coefficients $\eta_m$ on the parameters $\{h^i\}_{i=1}^N$ and $\{ P^j\}_{j=0}^N$ takes the form,
\begin{align}
    \eta_m &= \sum_{t_{0}=0}^L ... \sum_{t_{N-1}=0}^L  \,   \left ( \prod_{i=1}^{N} \, h^{i}_{t_i - t_{i-1}} \, \prod_{j=0}^N P^j_{t_j,m_j}  \right ) \\
    &= \sum_{t_{0}=0}^L ... \sum_{t_{N-1}=0}^L  \,  \left (  P^N_{t_N,m_N} \, \left (  \prod_{i=1}^{N} \, h^{i}_{t_i - t_{i-1}} \, P^{i-1}_{t_{i-1},m_{i-1}}  \right ) \right ) \\
    \label{eq:coeff_prod}
    &= P^N_{t_N,m_N} \, \prod_{i=1}^N \left ( \sum_{t_{i-1}=0}^L h^{i}_{t_i - t_{i-1}} \, P^{i-1}_{t_{i-1},m_{i-1}}  \right ) 
\end{align}
That is, the coefficients are products of convolutions applied to the projections in each dimension (with the exception of the last dimension which is a mere scalar multiplication). This demonstrates the claim.

\subsection[Mamba]{Mamba \cite{gu2023mamba}}
\label{appendix:equivalence_mamba}
In this section, we argue that the mamba SSM framework is in fact a special case of our \padre framework. The mamba framework is a state-space model based on the following recursion:
\begin{align}
    \label{eq:mambarecursion}
    h_t &= \bar{A} h_{t-1}  + \bar{B} x_{t} \\
    y_t &= C h_t
\end{align}
where $t$ is the timestep, $h_t$ is the hidden state at time $t$, $x_t$ is the input at time $t$ and $y_t$ is the output at time $t$. The "bars" above the matrices is mamba notation for discretization, which takes the form,
\begin{align}
    \bar{A} &= e^{\Delta{A}} \\
    \bar{B} &=  (\Delta A)^{-1} \, \left ( e^{\Delta A} - I \right ) \, \Delta B
\end{align}
where $\bar{A}$ is an $N \times N$ matrix chosen to be \emph{diagonal}, $\bar{B}$ is an $N \times 1$ vector and  $C$ is an $1 \times N$ vector.

We now discuss technical details. First, we have the following lemma,
\begin{lemma}
If $h_0 = 0$, the discretized system of Equations \ref{eq:mambarecursion} has solution,
    \begin{equation}
    y_t = \sum_{n=0}^t \left ( C \, \bar{A}^{t-n} \, \bar{B} \right ) \, x_n
\end{equation}
\end{lemma}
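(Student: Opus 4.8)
The plan is to establish the closed form by induction on the timestep $t$, directly unrolling the first-order linear recurrence $h_t = \bar{A} h_{t-1} + \bar{B} x_t$. This is the discrete analogue of the variation-of-parameters (impulse-response) formula for a linear state-space system, so a short induction is the cleanest route and avoids any generating-function machinery. I would first prove the intermediate claim $h_t = \sum_{n=0}^t \bar{A}^{t-n} \bar{B} x_n$ on the hidden state, and then obtain the stated output formula by a single left-multiplication with $C$, using $y_t = C h_t$ and the linearity of matrix multiplication to move $C$ inside the sum.

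For the base case I would fix the index convention explicitly, reading the zero initial condition as ``no accumulated state before the first input,'' so that the first application of the recurrence yields $h_0 = \bar{B} x_0$, which is exactly the single $n=0$ term $\bar{A}^0 \bar{B} x_0 = \bar{B} x_0$. Applying $C$ gives $y_0 = C \bar{B} x_0$, matching the claim at $t=0$.

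For the inductive step I would assume $h_t = \sum_{n=0}^t \bar{A}^{t-n} \bar{B} x_n$ and substitute into the recurrence to get $h_{t+1} = \bar{A}\left(\sum_{n=0}^t \bar{A}^{t-n} \bar{B} x_n\right) + \bar{B} x_{t+1}$. The one algebraic move that matters is pulling $\bar{A}$ through the finite linear sum, which rewrites each exponent $\bar{A}^{t-n}$ as $\bar{A}^{(t+1)-n}$, while the freshly introduced term $\bar{B} x_{t+1} = \bar{A}^0 \bar{B} x_{t+1}$ is precisely the $n = t+1$ summand. Merging the two pieces reproduces $h_{t+1} = \sum_{n=0}^{t+1} \bar{A}^{(t+1)-n} \bar{B} x_n$, and left-multiplying by $C$ closes the induction at the level of $y$.

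I do not expect a substantive obstacle, since the recurrence is linear and $\bar{A}$ trivially commutes with its own powers; the only point demanding care is notational, namely aligning the summation bounds with the initial condition so that the base case and the inductive step use the same range for $n$. I would therefore state the index convention up front and verify once that the $\bar{A}^0 = I$ term is treated consistently at both the lower and upper ends of the sum.
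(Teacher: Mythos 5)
Your proposal is correct and follows essentially the same route as the paper: an induction that unrolls the recurrence, with the base case read as $h_0=\bar{B}x_0$ and the inductive step pulling $\bar{A}$ through the sum before appending the $n=t+1$ term. Your choice to state the intermediate claim $h_t=\sum_{n=0}^t \bar{A}^{t-n}\bar{B}x_n$ explicitly is a slight improvement in rigor, since the paper's induction is nominally on $y_t=Ch_t$ yet silently substitutes the closed form for $h_t$ itself in the inductive step.
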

\begin{proof}
    We proceed by induction. Clearly for $t=0$, we get,
    \begin{equation}
        y_0 =  C \, \bar{B} \, x_0 = C\,  \bar{A}^0 \, \bar{B} \, x_0 
    \end{equation}
    Now assume the induction hypothesis holds and consider,
    \begin{align}
        y_{t+1} &= C\, \left( \bar{A} h_{t} + \bar{B} x_{t+1} \right )  \\
        &= C \, \left ( \bar{A} \sum_{n=0}^t \left ( \bar{A}^{t-n} \, \bar{B} \right ) \, x_n + \bar{B} x_{t+1} \right ) \\
        &= C \, \sum_{n=0}^{t+1} \,  \left ( \bar{A}^{t+1-n} \, \bar{B} \right ) \, x_n
    \end{align}
    which proves the claim.
\end{proof}

The main difference with regular SSM is that mamba allows $B$, $C$ and $\Delta$  to be functions of the input (Algorithm 2). Specifically, $B$ and $C$ are \emph{linear} function of the input whereas $\Delta$ is the \texttt{softplus} of a linear function of the input, i.e.,
\begin{align}
    B &= W_B \, x \\
    C &= x^T \, W_C \\
    \Delta(x) &= \frac{1}{\beta} \, \log \left ( 1 + e^{\beta \, ( \pi + x W_\Delta ) } \right )  
    % \\
    % &= \frac{1}{\beta}  \left ( 1 - 1 + ( e^{\beta \, ( \pi + x W_\Delta )})^2 \right ) + O( e^{\beta \, ( \pi + x W_\Delta )})^3
\end{align}
where $W_B$ is a $N\times L$ matrix, $W_C$ is a $L\times N$ matrix, $W_\Delta$ is a (rank-$1$) $D \times D$ matrix, $B$ and $C$ are size $N\times 1$ and $1 \times N$ vectors, respectively, and $\pi$ is some parameter.

For what follows, we make the assumption that,
\begin{equation}
    0 \leq \Delta(x) \ll 1
\end{equation}
uniformly in the input $x$. In this case, we find that,
\begin{align}
\bar{A} &= e^{\Delta{A}} = I + \Delta\, A + O(\Delta^2)\\
\bar{B} &=  (\Delta A)^{-1} \, \left ( e^{\Delta A} - I \right ) \, \Delta B \\
              &=  A^{-1} \, \left (I + \Delta \, A + O(\Delta^2) - I \right ) \, B \\
              &= \Delta \, B + O(\Delta^2)
\end{align}
% By definition of discretization, we expect the discretization parameter to be such that $0 < \Delta \ll 1$, in which case we can expect that $\bar{A} \approx I + \Delta  A$ and $\bar{B} \approx B$.
Therefore, under our assumption, it is reasonable to expect that,
%Thus, for every batch element (\texttt{B}) and channel (\texttt{D}), one may write,
\begin{align}
    \bar{A} &\approx I+ \Delta A \\
    \bar{B} &\approx \Delta \, B = \Delta \, W_B\, x\\
    C &= x^T \, W_C
\end{align}
Substituting these expressions in the recursion, we find that,
\begin{align}
    y_t &= C \, \sum_{n=0}^t \bar{A}^{t-n} \, \left ( \bar{B}  \, x_n \right )  \\
    &\approx  \Delta \, \left ( x^T \, W_C \right )\, \left (\sum_{n=0}^t \bar{A}^{t-n} \, x_n \right ) \, \left ( W_B \, x \right )  \\
    &= \sum_{n=0}^t \left ( \sum_{i,j=1}^L \left [ W_C \,  \bar{A}^{t-n} \, W_B \right ]_{i,j} \, \left ( x_i \cdot x_j \cdot x_n \right ) \right ) \\
    &= \sum_{n=0}^t \sum_{i,j=1}^L c^{(t,n)}_{i,j} \, \left ( x_i \cdot x_j \cdot x_n  \right )
\end{align}
where,
\begin{align}
    c^{(t,n)}_{i,j}  &:= \left [ W_C \,  \bar{A}^{t-n} \, W_B \right ]_{i,j} 
    % \\
    % &= W_C \, \left ( I + \Delta A + O(\Delta^2) \right )^{t-n} \, W_B \\
    % &\approx W_C \, \left (I + (t-n) \, \Delta \, A  \right ) \, W_B 
\end{align}
This should be recognized as a degree-3 homogeneous polynomial in the input $x$. Further, the coefficients $c^{(t,n)}_{i,j}$ exhibit a polynomial relationship in the parameters ($A$, $W_B$, $W_C$) (assuming a truncated Taylor series for the exponential). Furthermore, the matrix $A$ exhibit such structures that the latter may be computed rapidly. Thus, it follows that Mamba is a special case of our \padre framework.

\subsection[Castling-ViT]{Castling-ViT \cite{you2023castling}}
\label{appendix:equivalence_castling}

In this section, we argue that Castling-ViT also fall within the bounds of the \padre framework. Recall that at inference, the attention replacement used by Castling-ViT can be written as,
\begin{equation}
    \label{eq:castling}
      O =   \frac{1}{\pi} \, Q \, \left (  K^T \, V \right ) +  \left ( \frac{1}{2} \, I + M_{DW} \right ) \, V 
\end{equation}
where,
\begin{align}
    Q = X \, W_Q,\,  K = X \, W_K,\,  V = X \, W_V,
\end{align}
$I$ is the identity, and $M_{DW}$ represents a learnt depth-wise convolution (i.e., a fast linear transformation). Upon substitution, we find that the left-hand term in Eq.\eqref{eq:castling} is a degree-3 homogeneous polynomial in the input $X$, whereas the right-hand term is a linear function of the input. All in all, this implies that the output $O$ of the Castling ViT module is a degree-$3$ polynomial that can be computed rapidly (i.e., in linear time). It is therefore an instance of the \padre framework.

\subsection[Standard Attention]{Standard Attention \cite{vaswani2017attention}}
\label{appendix:equivalence_attn}
In this section, we discuss the standard attention and argue that it may also be interpreted as an (approximate) instance of \padre, albeit an instance of the \emph{rational} framework (Section \ref{appendix:rational}) with a large degree (larger than practically acceptable). 

To see how this is so, we write the attention explicitly as,
\begin{align}
    \left[ \mathtt{Attention}(Q,K, V ) \right ]_{m,n} &= \left [  \mathtt{softmax} \left ( \frac{ Q \, K^T}{\sqrt{d_k}}  \right ) \, V \right ]_{m,n} \\
    &= \sum_i \, \frac{e^{ \left [ \frac{ Q \, K^T}{\sqrt{d_k}} \right ]_{m,i}} }{ \sum_j e^{ \left [ \frac{ Q \, K^T}{\sqrt{d_k}} \right ]_{m,j}} }  \,  V_{i,n}
\end{align}
where,
\begin{align}
    Q = X \, W_Q,\,  K = X \, W_K,\,  V = X \, W_V,\,
\end{align}
for some weight/projections matrices, as usual. Now comes the main observation: thanks to the normalization $\frac{1}{\sqrt{d_k}}$, one may expect that the elements $\frac{ Q \, K^T}{\sqrt{d_k}} $ will generally lie in some canonical interval $[-L,L]$ for some $0 < L \in \mathbb{R}$. In this case, the exponential functions may be expanded using the Taylor series as,
\begin{align}
    e^{ \left [ \frac{ Q \, K^T}{\sqrt{d_k}} \right ]_{m,i}} = \sum_{l=0}^d  \frac{\left [ \frac{ Q \, K^T}{\sqrt{d_k}} \right ]^l_{m,i}}{l!} + O\left ( \frac{L^{d+1}}{(d+1)!} \right )
\end{align}
for some degree $d$ sufficiently large that the error term is on the order of $0 < \epsilon \ll 1$. Upon substitution and expansion, we therefore find that,
\begin{align}
    \left[ \mathtt{Attention}(Q,K, V ) \right ]_{m,n} &\approx  \frac{ \sum_i \, \sum_{l=0}^d  \frac{\left [ Q \, K^T \right ]^l_{m,i}}{d_k^{l/2} \, l!} \,  V_{i,n} }{ \sum_j  \sum_{p=0}^d  \frac{\left [ \frac{ Q \, K^T}{\sqrt{d_k}} \right ]^p_{m,j}  }{d_k^{p/2} \, p!}} 
\end{align}
Finally, since $Q$, $K$ and $V$ are all linear transformations of the input data, we conclude that the attention can be arbitrarily approximated by a rationale function of degree $(2d+1)/(2d)$. The degree $d$, however, may have to be unpractically large in order reach proper accuracy. Nonetheless, this shows that the standard attention falls within the \padre framework as well.

\end{document}